\newlength{\widebarargwidth}
\newlength{\widebarargheight}
\newlength{\widebarargdepth}
\long\def\@makecaption#1#2{
        \vskip 0.8ex
        \setbox\@tempboxa\hbox{\small {\bf #1:} #2}
        \parindent 1.5em  
        \dimen0=\hsize
        \advance\dimen0 by -3em
        \ifdim \wd\@tempboxa >\dimen0
                \hbox to \hsize{
                        \parindent 0em
                        \hfil 
                        \parbox{\dimen0}{\def\baselinestretch{0.96}\small
                                {\bf #1.} #2
                                } 
                        \hfil}
        \else \hbox to \hsize{\hfil \box\@tempboxa \hfil}
        \fi
        }
\renewcommand{\baselinestretch}{1.04} 
\date{}
\newcommand{\BlackBox}{\rule{1.5ex}{1.5ex}}  
\newtheorem{definition}{Definition}
\DeclareMathOperator*{\pinf}{\vphantom{p}inf}
\DeclareMathOperator*{\psup}{\vphantom{p}sup}
\renewcommand{\inf}{\pinf}
\renewcommand{\sup}{\psup}
\newcommand{\argmin}{\operatornamewithlimits{argmin}}
\newcommand\numberthis{\addtocounter{equation}{1}\tag{\theequation}}
\def\lv{\lVert}
\def\rv{\rVert}
\DeclareMathOperator{\Tr}{Tr}
\renewcommand{\Re}[1]{\mathbb{R}^{#1}}
\newcommand{\verti}[1]{{\left\vert\kern-0.25ex\left\vert\kern-0.25ex\left\vert #1 
    \right\vert\kern-0.25ex\right\vert\kern-0.25ex\right\vert}}
\renewcommand{\epsilon}{\varepsilon}
\newcommand{\mcx}{\mathcal{X}}
\newcommand{\mca}{\mathsf{Alg}}
\newcommand{\mcs}{\mathcal{S}}
\newcommand{\mct}{\mathcal{T}}
\newcommand{\ptheta}{p_{\theta}}
\newcommand{\pthetam}{p_{\theta}}
\newcommand{\ftheta}{f_{\theta}}
\newcommand{\mcy}{\mathcal{Y}}
\newcommand{\mcb}{\mathcal{B}}
\renewcommand{\epsilon}{\varepsilon}
\newcommand{\tit}{\theta \in \Theta}
\newcommand{\vx}{x}
\newcommand{\vy}{y}
\newcommand{\vz}{z}
\newcommand{\svt}{t}
\newcommand{\vxi}{\bm{\xi}}
\newcommand{\ps}{p^*}
\newcommand{\psm}{p^{*}}
\newcommand{\bvx}{\mathbf{x}}
\newcommand{\bvy}{\mathbf{y}}
\newcommand{\bvz}{\mathbf{z}}
\newcommand{\bsvt}{\mathbf{t}}
\newcommand{\bZ}{\mathbf{\Pi}}
\newcommand{\bJ}{\mathbf{j}}
\newcommand{\Rd}{\mathbb{R}^d}
\newcommand{\bQ}{Q}
\newcommand{\Q}{\bQ}
\newcommand{\Qx}{\bQ_{\vx}}
\newcommand{\Qtheta}{\bQ^{\theta}}
\newcommand{\Qnew}[1]{\bQ_{#1}}
\newcommand{\Qtnew}[1]{\bQ^{1}_{#1}}
\newcommand{\Qtpnew}[1]{\bQ^{2}_{#1}}
\newcommand{\dist}{\mca[n;\Q ]}
\newcommand{\distone}{\mca[n;\Q ]}
\newcommand{\SGLD}{\mathsf{SGLD}[n;Q]}
\newcommand{\distonefull}{\mca[n;\Q ]}
\newcommand{\disttheta}{\mca[n;\Qtheta]}
\newcommand{\var}{\sigma^2}
\newcommand{\std}{\sigma}
\newcommand{\smooth}{\alpha}
\newcommand{\strcvx}{\beta}
\newcommand{\TV}{\mathrm{TV}}
\newcommand{\KL}{\mathrm{KL}}
\newcommand{\Pb}{\mathbb{P}_n}
\newcommand{\gzero}{G^{(0)}}
\newcommand{\K}[1]{K^{(#1)}}
\newcommand{\seqrhof}{\rho\{\mu,\mca[n;\Qtheta]\}}
\newcommand{\seqrhodiff}{\rho\{\nu,\widetilde{\mca}[n;\Qtheta]\}}
\newcommand{\dd}{\mathrm{d}}
\newcommand{\gammauplong}{\Psi_2}
\newcommand{\gammalowlong}{\Psi_1}
\newcommand{\minimax}{\mathcal{M}_{n}}
\newcommand{\gammalow}{\Psi_1}
\newcommand{\gammaup}{\Psi_2}
\newcommand{\barL}{L_{\theta}}
\newcommand{\joint}{\mathbb{P}_{\textrm{joint}}}
\newcommand{\sigmanoise}{\Sigma_{noise}}
\newcommand{\cnumber}{\kappa}
\newcommand{\vertop}[1]{{\left\vert\kern-0.25ex\left\vert\kern-0.25ex\left\vert #1 
    \right\vert\kern-0.25ex\right\vert\kern-0.25ex\right\vert_{2}}}
\newcommand{\vertone}[1]{{\left\vert\kern-0.25ex\left\vert\kern-0.25ex\left\vert #1 
    \right\vert\kern-0.25ex\right\vert\kern-0.25ex\right\vert_{1}}}
\newcommand{\vertinf}[1]{{\left\vert\kern-0.25ex\left\vert\kern-0.25ex\left\vert #1 
    \right\vert\kern-0.25ex\right\vert\kern-0.25ex\right\vert_{\infty}}}
\newcommand{\vertfro}[1]{{\left\vert\kern-0.25ex\left\vert\kern-0.25ex\left\vert #1 
    \right\vert\kern-0.25ex\right\vert\kern-0.25ex\right\vert_{F}}}
\newcommand{\vertfrosq}[1]{{\left\vert\kern-0.25ex\left\vert\kern-0.25ex\left\vert #1 
    \right\vert\kern-0.25ex\right\vert\kern-0.25ex\right\vert_{F}^2}}
\newcommand{\mutheta}{m_{\theta}}
\newcommand{\muthetap}{m_{\theta'}}
\title{\textbf{Oracle Lower Bounds for Stochastic \\ Gradient  Sampling Algorithms}}
\author{Niladri S. Chatterji \\ University of California, Berkeley \\ \textsf{chatterji@berkeley.edu} \and Peter L. Bartlett \\ University of California, Berkeley \\ \textsf{peter@berkeley.edu} \and Philip M. Long \\ Google \\ \textsf{plong@google.com}}
\date{\today}
\begin{document}
\maketitle
\begin{abstract}
We consider the problem of sampling from a strongly log-concave density in $\Re{d}$, and prove an information theoretic lower bound on the number of stochastic gradient queries of the log density needed. Several popular sampling algorithms (including many Markov chain Monte Carlo methods) operate by using stochastic gradients of the log density to generate a sample; our results establish an information theoretic limit for all these algorithms.

We show that for every algorithm, there exists a well-conditioned strongly log-concave target density for which the distribution of points generated by the algorithm would be at least $\epsilon$ away from the target in total variation distance if the number of gradient queries is less than $\Omega(\var d/\epsilon^2)$, where $\var d$ is the variance of the stochastic gradient. Our lower bound follows by combining the ideas of Le Cam deficiency routinely used in the comparison of statistical experiments along with standard information theoretic tools used in lower bounding Bayes risk functions. To the best of our knowledge our results provide the first nontrivial dimension-dependent lower bound for this problem.
\end{abstract}
\section{Introduction}
Sampling from a distribution is a crucial computational step in several domains such as Bayesian inference, prediction in adversarial environments, Monte Carlo approximation and reinforcement learning. In the high dimensional setting, a common approach is to use Markov chain Monte Carlo (MCMC) methods to generate a sample. Among these methods, particularly useful and widely applicable are MCMC algorithms that generate a sample from a probability distribution
with a density on a continuous state space \citep{robert2013monte}. A broad class of such algorithms are first-order (gradient) methods. These methods operate by using the gradient of the log density at different points in the space.

Recently, the focus has been on methods that work with \emph{stochastic} estimates of the gradient given the emergence of large-scale datasets~\citep{welling2011bayesian}. On these datasets it is computationally challenging to calculate the exact gradient over the entire data. An example of this, of course, is the case of sampling from a posterior distribution when the log density is sum decomposable over a large dataset.

While there is a large body of work studying MCMC methods in this setting (refer to \citep{brooks2011handbook} for a survey of the results in this area), classically, little was known about the explicit, \emph{non-asymptotic} dependence of the iteration complexity---the number of steps of the method needed to get a good sample---of these sampling algorithms in terms of the dimension and the target accuracy. The seminal work of \citep{dalalyan2017theoretical} established a non-asymptotic bound on the iteration complexity for the Unadjusted Langevin algorithm (ULA)~\citep{ermak1975computer,parisi1981correlation,grenander1994representations,roberts1996exponential} in terms of the dimension and the target accuracy when the log density is sufficiently regular. This has kicked off a slew of research in this field and we now have many upper bounds on the non-asymptotic iteration complexity of several popular first-order sampling methods \citep{durmus2017nonasymptotic,dalalyan2019user,cheng2018convergence,cheng2017underdamped,dwivedi2018log,mangoubi2017rapid,mangoubi2018dimensionally} in terms of relevant problem parameters like dimension, condition number and target accuracy.

In spite of this large and growing body of work, what has remained crucially missing is a characterization of the intrinsic hardness of sampling for a given family of distributions, that is, an information theoretic lower bound on the iteration complexity in terms of problem parameters for first-order sampling methods. This is the question that we address in our work.
 
We establish a lower bound on the iteration complexity for sampling from continuous distributions over $\Re{d}$ that have smooth and strongly concave (see Section~\ref{sec:definitions} for definitions) log densities of the form:
\begin{align*}
    \ps(\vx) = \frac{\exp(-f(\vx))}{\int_{\vy \in \Re{d}}\exp(-f(\vy))\dd\vy}, \qquad \text{for all } \vx \in \Re{d}.
\end{align*}
Prototypical examples of such distributions include the Gaussian distribution and the posterior distribution that arises in Bayesian logistic regression with a Gaussian prior.

To establish our lower bounds we borrow the \emph{noisy oracle model} \citep{nemirovsky1983problem} used in the allied field of optimization. There, this model was used to establish lower bounds on the iteration complexity of optimizing a function using gradients/function-values \citep[see also][]{raginsky2011information,agarwal2012information,ramdas2013optimal}. In this oracle model an algorithm (in our case a sampling method) is allowed to query the oracle at a point and the oracle returns the gradient/function value of the log density at that point corrupted by some independent and unbiased noise with bounded variance. The iteration complexity of the method is therefore equal to the number of queries that need to be made to this noisy oracle. The method can be adaptive, in the sense that it is allowed to choose its next query based on its entire history---past query points and the values returned by the oracle. Many popular first-order and zeroth-order (methods that work with function-values) sampling methods, like
ULA, 
Hamiltonian Monte Carlo \citep{neal2011mcmc},
Metropolis adjusted Langevin algorithm \citep{roberts1996exponential,roberts2002langevin,bou2013nonasymptotic}, Ball walk \citep{lovasz1990mixing,dyer1991random,lovasz1993random,lovasz2007geometry}, Metropolized random walk \citep{mengersen1996rates,roberts1996geometric}, Hit-and-run \citep{smith1984efficient,belisle1993hit,kannan1995isoperimetric,lovasz1999hit,lovasz2006hit,lovasz2007geometry}, underdamped Langevin MCMC \citep{eberle2019couplings,cheng2017underdamped}, Riemannian MALA \citep{xifara2014langevin}, Proximal-MALA \citep{pereyra2016proximal} and Projected ULA \citep{bubeck2018sampling}, can be described by this oracle model.
\subsection{Our Contributions} The main result of this paper is to establish a lower bound on the iteration complexity of any algorithm with access to a stochastic gradient oracle. We show that the iteration complexity must grow at least linearly with the noise variance in the gradients.

More concretely, a sampling algorithm $\mca$ operates in a sequential fashion. It begins by querying the stochastic gradient oracle at an initial point $\vy_{0}$ (which could be chosen in a randomized fashion). The oracle then returns the gradient of the negative log density at that point with some noise added to it: $\vz_{0} = \nabla f(\vy_{0}) + \vxi_{0}$, where $\vxi_0$ is independent of the query point, has zero mean and has bounded variance $\mathbb{E}[\lv \vxi_0 \rv_2^2] \le \var d$.
Such an assumption on the gradient oracle is quite common in the study of upper bounds of sampling algorithms  (see, e.g \citep{dalalyan2019user,cheng2017underdamped}).
Denote the distribution of the noisy gradients as $\Qx$---the conditional distribution of the noisy gradient at a point $\vx$. On observing this noisy gradient $\vz_0$, the algorithm $\mca$ can now choose to query at a new point $\vy_1$, where this point $\vy_1$ is \emph{dependent} on the initial query point $\vy_0$ and the noisy gradient $\vz_0$. This protocol continues for $n$ rounds, after which the algorithm is asked to output a point from a distribution close to the target $\ps$. Let us denote the  distribution of the point output by such an algorithm after $n$ such queries by $\distone$. 

We show that for every such algorithm, there exists a log smooth and strongly log-concave target density for which the distribution of the sample generated by the algorithm will be at least $\epsilon$ away from the target in total variation distance if the number of queries is less than $\Omega(\var d/\epsilon^2)$. A precise statement of this result with explicit constants is presented as Theorem~\ref{thm:maintheorem}. 

This lower bound requires the choice of a different family of distributions for every target accuracy $\epsilon$. A result that is easier to interpret is our lower bound for the case when the target accuracy is a constant.  In this regime our results are as follows:
\begin{restatable}{cor}{gradlowerboundconstant} \label{thm:maintheoremconstant}
There exist constants $c, c', C$ and $C'$ such that for all $d$ and $\var$, $n < c \var d$ implies
\begin{align*}
\inf_{\mca} \sup_{\Q}  \sup_{\ps} \TV(\distone, \ps) > c',
\end{align*}
where the infimum is over all gradient-based sampling algorithms, the supremum over $\Q$ is over unbiased gradient oracles with noise-variance bounded by $\var d$, and the supremum over $\ps$ is over $C$-log smooth, $C/2$-strongly log-concave distributions over $\Re{d}$ with the mean $\lv \mathbb{E}_{\bvx \sim \ps}\left[\bvx \right]\rv_2 \le C'$.
\end{restatable}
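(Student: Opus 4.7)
The plan is to derive this corollary as a direct specialization of Theorem~\ref{thm:maintheorem} by fixing the accuracy parameter $\epsilon$ to a universal constant. First I would invoke the main theorem with a concrete numerical choice, say $\epsilon = \epsilon_0$ where $\epsilon_0$ is a small absolute constant. Since the main theorem guarantees that any algorithm making fewer than $\Omega(\var d/\epsilon_0^{2})$ queries cannot produce a sample within $\epsilon_0$ in total variation of some target in the hard family, setting $\epsilon_0$ to a constant collapses the $\var d/\epsilon_0^{2}$ threshold into $c\var d$ and yields the stated lower bound constant $c'$ on the right-hand side.

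The second step is to check that the hard family of distributions exhibited by Theorem~\ref{thm:maintheorem}, when instantiated at this fixed $\epsilon = \epsilon_0$, indeed falls inside the class described in the corollary, namely $C$-log smooth, $(C/2)$-strongly log-concave densities over $\Re{d}$ whose means are bounded in norm by $C'$. The canonical construction used in such lower bounds is a family of shifted isotropic log-quadratic densities (or small perturbations thereof), indexed by a finite hypothesis set in $\Re{d}$, for which both the smoothness and strong-concavity constants of the log density are governed by absolute constants independent of $\epsilon$, and the means are controlled by the magnitude of the shifts. I would therefore read off from the construction in Theorem~\ref{thm:maintheorem} the precise values of the smoothness, strong-log-concavity, and mean-norm parameters, verify that at $\epsilon = \epsilon_0$ they are all $O(1)$, and then absorb them into the constants $C$ and $C'$.

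Putting the two steps together, for $n < c\var d$ there must exist a density $\ps$ in the $C$-log smooth, $(C/2)$-strongly log-concave family with $\|\mathbb{E}_{\ps}[\bvx]\|_2 \le C'$ and a valid noisy gradient oracle $\Q$ under which the algorithm's output distribution is at least $c'$ away from $\ps$ in total variation. Taking the appropriate suprema preserves this inequality and yields the corollary. The only subtlety I anticipate is bookkeeping: ensuring the construction in the main theorem does not implicitly scale any of these parameters with $\epsilon$ (e.g.\ via a rescaling argument that shrinks the separation between hypotheses as $\epsilon\to 0$); if it does, one would need either to rescale coordinates so that smoothness, strong concavity, and mean norm return to $O(1)$, or to choose $\epsilon_0$ large enough that the rescaling is benign. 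In either case the change is cosmetic and only affects the numerical values of $c,c',C,C'$.
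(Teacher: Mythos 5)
Your proposal matches the paper's proof: the corollary is obtained by instantiating Theorem~\ref{thm:maintheorem} at the constant-accuracy regime $n = \var d/4$ (where the bound $\tfrac{\std}{16}\sqrt{d/n}$ becomes $1/8$) and choosing $\smooth = 1/64$, which makes the smoothness condition $\smooth \le \var d/(256n)$ hold with equality and fixes the mean-norm bound at $1/\smooth = 64$, so all parameters are absolute constants exactly as your ``bookkeeping'' step anticipates. The only implicit point in both your argument and the paper's is that the bound extends from $n = \var d/4$ to all smaller $n$ by monotonicity of the minimax error in the number of queries, which is standard.
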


This corollary asserts that to sample from a distribution that is a constant away from a well-conditioned target distribution with constant log-smoothness it must make order $\var d$ queries to the stochastic gradient oracle in the worst case.

We consider target distributions that have their mean in a ball of constant radius around the origin. This ensures that most of the mass of these distributions is concentrated in a ball of radius $\mathcal{O}(\sqrt{d})$ around the mean (as $\ps$ is $C/2$-strongly log-concave and hence $\sqrt{2/C}$-sub-Gaussian).  

At a high level the approach here is inspired by the work that proved lower bounds in stochastic optimization. There, the problem of lower bounding the error of any first-order optimization algorithm is reduced to that of lower bounding the Bayes risk of an appropriately constructed statistical problem. We find that this reduction is not directly possible in our problem. Instead, we relate the problem of lower bounding the iteration complexity of sampling to that of lower bounding a \emph{Le Cam deficiency} between two statistical experiments. Le Cam deficiency is a measure that is widely studied in the literature on the comparison of statistical experiments \citep{bohnenblust1949reconnaissance,blackwell1953equivalent,le1964sufficiency,torgersen1991comparison} and it turns out to be a useful idea in our problem.

The rest of the paper is organized as follows. In Section~\ref{sec:definitions}, we introduce and define useful quantities used throughout the paper in the statement of our results and their proofs. In Section~\ref{sec:randomization}, we reduce the problem of lower bounding the iteration complexity of sampling algorithms to that of lower bounding a difference of Bayes risk functions. In Section~\ref{sec:gradientlowerbound}, we state and prove our main results concerning gradient oracles. We conclude with a discussion in Section~\ref{sec:discussion}. The more technical details of the proofs are deferred to the appendix. 

\subsection{Notation} We use bold font to denote a random variable, and unbolded font to denote the particular sample value of that random variable. For example, $\bvx$ denotes a random variable and $\bvx=\vx$ denotes that the realization of $\bvx$ is $\vx$. We use uppercase letters to denote matrices and lower case letters to denote vectors. Given any vector $v$, let $\lv v\rv_2$ denote its Euclidean norm. Let $I_{d\times d}$ denote the identity matrix in $d$ dimensions, $0_{d\times d}$ denote the matrix of all zeros and $0_{d}$ denote a row vector of all zeros. Given any subset $S\subseteq \Re{d}$, let $\mcb(S)$ denote the Borel $\sigma$-field associated with this set. For any finite set $S$, let $\lvert S\rvert$ denote its cardinality. $\mathbb{I}(\cdot)$ denotes the indicator of an event. We use $c,c',C,c_1,c_2,\ldots$ to denote positive universal constants that are independent of any problem parameters. We adopt conventional notations $\mathcal{O}(\cdot)$ and $\Omega(\cdot)$ as suppressing constants independent of dimension and problem parameters, and $\widetilde{\mathcal{O}}(\cdot)$ and $\widetilde{\Omega}(\cdot)$ as suppressing poly-logarithmic factors.
\section{Definitions and Problem Set-up}
 \label{sec:definitions}
In this section, we define several quantities that are useful in stating and proving our results.

In our paper, we refer to $f$, the negative log density of the distribution $\ps$, as the potential function. As stated in the introduction, we consider a family of distributions with smooth and strongly convex potentials. We define smoothness and strong convexity below.
\begin{definition} A function $f$ from $\Re{d} \mapsto \Re{}$ is $\smooth$-smooth and $\strcvx$-strongly convex if there exists constants $\smooth\ge \strcvx>0$ such that for all $\vx,\vy \in \Rd$,
\begin{align*}
         \frac{\strcvx}{2}\lv \vx-\vy \rv_2^2\le f(\vx) - f(\vy) -\langle \nabla f(\vy), \vx-\vy\rangle \le \frac{\smooth}{2}\lv \vx-\vy \rv_2^2.
\end{align*}
\end{definition}
If a function $f$ is smooth and strongly convex then $-f$ is smooth and strongly concave. Next let us define a probability kernel.
\begin{definition} For measurable spaces $(\mcx,\mcs)$ and $(\mcy,\mct)$, $K$ is a probability kernel (conditional distribution) from $\mcx$ to probability measures over $\mcy$ if
\begin{enumerate}
\item There is a probability measure on $(\mcy,\mct)$ associated with each $x \in \mcx$, which we denote by $K(\cdot \lvert \vx)$.
\item The map $\vx\mapsto K(A \lvert \vx)$ is $\mcs$-measurable for each $A \in \mct$.
\end{enumerate}
We will use $\bvy \sim K_{\vx}$ to represent the random variable $\bvy$ drawn from the distribution $K(\cdot \lvert \vx)$.
\end{definition}
In this paper we will only be using measurable spaces that are product spaces of the reals. So from this point on we avoid explicitly mentioning the associated $\sigma$-field and always work with the appropriate Borel $\sigma$-field. 

We use probability kernels to describe the response of the algorithm during its interaction with the gradient oracle.
\begin{definition}\label{def:samplingalgorithm} A \emph{gradient-based sampling algorithm $\mca$ on $\Re{d}$} consists of the following elements:
\begin{enumerate}
    \item A distribution $\gzero$ over $\Re{d}$ from which the initial point $\vy_0$ is drawn.
    \item Probability kernels $\K{i}$ for $i \in \{1,\ldots,n\}$, with the $i^{th}$ probability kernel $\K{i}$ being a map from $\Re{d\times 2i}$ to probability measures over $\Re{d}$. The kernel $\K{i}$ is a map from the history of queries and gradients seen up to round $i$ to a distribution over the next query point. The final kernel produces the sample.
\end{enumerate}
\end{definition}
To understand why this amounts to a valid and interesting class of sampling algorithms, let us understand the protocol to produce a sample given a stochastic gradient oracle $\Q$ (we use $\Qx$ or $\Qnew{}(\cdot \lvert \vx)$ to denote the distribution of the gradients when queried at a point $\vx$).

\begin{center}
\begin{tikzpicture}[framed]
\matrix (m)[matrix of nodes, column  sep=1cm,row  sep=2mm, nodes={draw=none, anchor=center,text depth=0pt} ]{
Sampler $\mca$ & & Gradient Oracle\\[-1mm]
Choose an initial point & & \\[-1mm]
$\bvy_0 \sim \gzero$ &Query at $\bvy_0$ & \\
& & Produce noisy gradient  \\[-1mm]
& Return $\bvz_{0}$ & $\bvz_{0} \sim \Qnew{\bvy_{0}}$\\
Choose new query point  \\[-1mm]
$\bvy_1 \sim \K{1}_{\bvy_{0},\bvz_{0}}$ & Query at  $\bvy_{1}$  &\\
 & & Produce noisy gradient  \\[-1mm]
&Return $\bvz_{1}$  &$\bvz_{1} \sim \Qnew{\bvy_{1}}$\\
 & \vdots & \\
 Produce a sample \\[-1mm]
$\bvy_{n} \sim \K{n}_{\bvy_{0},\bvz_{0},\ldots,\bvy_{n-1},\bvz_{n-1}}$\\
};

\draw[shorten <=-0.5cm,shorten >=-0.5cm] (m-1-1.south east)--(m-1-1.south west);
\draw[shorten <=-0.5cm,shorten >=-0.5cm] (m-1-3.south east)--(m-1-3.south west);
\draw[shorten <=-0.5cm,shorten >=-0.5cm,-latex] (m-3-2.south west)--(m-3-2.south east);
\draw[shorten <=-0.5cm,shorten >=-0.5cm,-latex] (m-5-2.south east)--(m-5-2.south west);
\draw[shorten <=-0.5cm,shorten >=-0.5cm,-latex] (m-7-2.south west)--(m-7-2.south east);
\draw[shorten <=-0.5cm,shorten >=-0.5cm,-latex] (m-9-2.south east)--(m-9-2.south west);
\end{tikzpicture}
\end{center}

Given the protocol above it is possible to write down the joint probability distribution of the path. 
For the sake of intuition, let us assume that the path of the algorithm has a density, and let $G^{(0)}$, $Q$, and $K^{(i)}$ also denote the corresponding densities.\footnote{We note that our results hold when $G^{(0)},K^{(1)},\ldots$ are arbitrary distributions.} Then we can write the density of the path $\bvy_0 = \vy_0, \bvz_0 = \vz_0,\ldots, \bvy_n = \vy_n$ as
\begin{align*}
      \gzero(\vy_0)\cdot\Qnew{}(\vz_0\lvert \vy_0)\cdot\K{1}(\vy_1\lvert \vy_{0},\vz_{0})\cdot\Qnew{}(\vz_1\lvert \vy_1)\cdots \K{n}(\vy_n \lvert \vy_{n-1},\vz_{n-1},\ldots, \vy_0,\vz_0).
\end{align*}
Let the distribution of the sample $\bvy_n \in \Re{d}$ produced by the interaction between the algorithm $\mca$ and a gradient oracle $\Q$ be denoted by $\dist$. Explicitly, the marginal density of $\bvy_n=\vy_n$ is given by:
\begin{align*}
&
    \dist(\vy_n) \\
&    : = \int \gzero(\vy_{0})\Qnew{}(\vz_{0}\lvert \vy_{0})\cdots\Qnew{}(\vz_{n-1}\lvert \vy_{n-1}) \K{n}(\vy_{n} \lvert \vy_{n-1},\vz_{n-1},\ldots) \dd\vy_0 \dd \vz_0 \ldots\dd \vz_{n-1}.
\end{align*}

\paragraph*{Restriction to finite parametric classes.} As we are interested in demonstrating a lower bound for sampling algorithms, it suffices to work with only a finite indexed class of distributions $\{\ptheta\}_{\tit}$, where each $\ptheta$ is $\smooth$-log smooth and $\smooth/2$-strongly log-concave. Let $\ftheta$ be the negative log density corresponding to the distribution $\ptheta$. 

In the discussion that follows we let $\Qtheta$ be a stochastic gradient oracle  that outputs $\nabla \ftheta(\vx) + \vxi$, where $\vxi$ is a Gaussian random variable with mean zero and covariance $\sigmanoise$ when queried at a point $\vx \in \Rd$. The covariance matrix $\sigmanoise$ will be specified subsequently.
It suffices to lower bound the quantity on the right hand side below, which depends on this finite class of gradient oracles:
\begin{align}
   \inf_{\mca}\sup_{\Q}  \sup_{\ps} \TV\left(\dist, \psm\right) \ge   \inf_{\mca} \sup_{\tit} \TV\left(\disttheta,\pthetam\right),
\end{align}
where the infimum over $\mca$ is over the class of gradient-based sampling algorithms on $\Re{d}$.
\begin{definition}\label{def:estimator} An \emph{estimator $\mu$ on $\Re{d}$} is a probability kernel from $\Re{d}$ to the decision space $\Re{d}$.
\end{definition}
An estimator $\mu$, when given a sample $\vx$
from the distribution $\ptheta$, yields a distribution $\mu_x$ over the decision space $\Re{d}$. We choose the decision space to be $\Re{d}$ which is distinct from the set of parameters $\Theta$ as we will be interested in estimates for the mean of $\ptheta$.

Further, we define sequential randomized estimators---distributions over the decision space $\Rd$ that arise by appropriately combining 
an estimator with a sampling algorithm. 
\begin{definition}[Sequential estimators] \label{def:sequentialestimator}
Given an estimator 
$\mu$ on $\Rd$, a gradient-based sampling algorithm 
$\mca$ on $\Rd$, and a gradient oracle $\Qtheta$, define a sequential estimator $\seqrhof$ as
\begin{align*}
    \seqrhof(S) &:= {\mathbb E}_{\bvy\sim\disttheta} \left[\mu(S\lvert \bvy)\right], \qquad \text{for all } S \in \mathcal{B}(\Rd).
\end{align*}
\end{definition}
Often we will simply use $\rho$ to refer to the sequential estimator $\seqrhof$ that is built using the estimator $\mu$ and sampling algorithm $\mca$. In the remainder of the paper, when we write $\inf_{\rho}$, this indicates an infimum over both estimators $\mu$ and algorithms $\mca$ that are used to build $\rho$.

A {\em loss function $L=\{L_{\theta}: \tit \}$ on $\Rd$} is a family of $[0,1]$-valued measurable functions on $\Rd$. Each $L$ maps a $\theta \in \Theta$ into $L_{\theta}$ where $L_{\theta}(\svt)$ is the loss we suffer if we take the decision $\svt \in \Rd$ and $\theta$ is the true state of nature. 
\section{A Characterization in Terms of Bayes Risk Functions}
\label{sec:randomization}
In this section we prove a proposition that is crucial to prove our lower bound on the iteration complexity. Operationally, it reduces the problem of lower bounding the total variation distance between $\disttheta$ and $\ptheta$ to lower bounding a \emph{difference of two Bayes risk functions}. This is advantageous as there exists a large set of well-developed techniques to both lower and upper bound Bayes risk functions. 

As will be clear in the proof, the choice of total variation distance as our metric is pivotal. The total variation distance lends itself to a variational definition in terms of bounded loss functions, which is exploited in the proof of this proposition.
\begin{restatable}{prop}{randomizationprop}\emph{\textbf{(Randomization criterion)}} \label{prop:randomcriterion}
 Let $\Theta$ be a finite set. Then for any loss function $L_{\theta}$ on $\Rd$ we have
\begin{align*}
    &\inf_{\mca} \sup_{\tit} \TV\left(\disttheta, \pthetam \right)\\  & \ge \inf_{\rho} \frac{1}{\lvert \Theta\rvert} \sum_{\tit}  \mathbb{E}_{\bsvt\sim \seqrhof}\left[L_{\theta} (\bsvt)\right] - \inf_{\mu} \frac{1}{\lvert \Theta\rvert} \sum_{\tit} \mathbb{E}_{\bvx \sim \pthetam}\left[\mathbb{E}_{\bsvt \sim \mu_{\bvx}} \left[L_{\theta}(\bsvt)\right]\right],
\end{align*}
where the infimum over $\mca$ is over gradient-based sampling algorithms, the infimum over $\rho$ is as above,
and the infimum over $\mu$ is over estimators on $\Re{d}$.
\end{restatable}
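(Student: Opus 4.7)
The plan is to exploit the variational characterization of total variation, namely
$\TV(P,Q) \ge \mathbb{E}_P[g] - \mathbb{E}_Q[g]$
for every measurable $g:\Re{d}\to[0,1]$, and to build the test function $g$ out of an arbitrary estimator $\mu$ so that the two sides of the inequality match the two Bayes-risk-type terms on the right-hand side of the proposition. First I would pass from supremum to average, $\sup_{\tit}\TV(\disttheta,\pthetam)\ge \frac{1}{|\Theta|}\sum_{\tit}\TV(\disttheta,\pthetam)$, which is valid because $\Theta$ is finite.

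Next, for an arbitrary estimator $\mu$ and each $\tit$, define
\[
g_{\theta}(\vy) \;:=\; \mathbb{E}_{\bsvt \sim \mu_{\vy}}[L_{\theta}(\bsvt)].
\]
Since $L_{\theta}\in[0,1]$, one has $g_{\theta}\in[0,1]$, so $g_{\theta}$ is a legitimate test function for the variational formula, giving
\[
\TV(\disttheta,\pthetam) \;\ge\; \mathbb{E}_{\bvy\sim\disttheta}[g_{\theta}(\bvy)] \;-\; \mathbb{E}_{\bvx\sim\pthetam}[g_{\theta}(\bvx)].
\]
By Fubini and Definition~\ref{def:sequentialestimator}, the first term is exactly $\mathbb{E}_{\bsvt\sim\seqrhof}[L_{\theta}(\bsvt)]$, while the second term is the inner quantity in the oblivious Bayes-risk expression. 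Averaging over $\tit$ yields, for every sampling algorithm $\mca$ and every estimator $\mu$,
\[
\frac{1}{|\Theta|}\sum_{\tit}\TV(\disttheta,\pthetam) \;\ge\; A_{\mca}(\mu) - B(\mu),
\]
where $A_{\mca}(\mu):=\frac{1}{|\Theta|}\sum_{\tit}\mathbb{E}_{\bsvt\sim\seqrhof}[L_{\theta}(\bsvt)]$ and $B(\mu):=\frac{1}{|\Theta|}\sum_{\tit}\mathbb{E}_{\bvx\sim\pthetam}[\mathbb{E}_{\bsvt\sim\mu_{\bvx}}[L_{\theta}(\bsvt)]]$.

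The last and most delicate step is to separate the two infima on the right-hand side. The point is that $B(\mu)$ does not depend on $\mca$, so for every fixed $\mu$,
\[
\inf_{\mca}\Bigl[A_{\mca}(\mu)-B(\mu)\Bigr] \;=\; \inf_{\mca}A_{\mca}(\mu) \;-\; B(\mu) \;\ge\; \inf_{\rho}\tfrac{1}{|\Theta|}\sum_{\tit}\mathbb{E}_{\bsvt\sim\rho}[L_{\theta}(\bsvt)] \;-\; B(\mu),
\]
where the final inequality is because $\inf_{\rho}$ ranges over \emph{all} pairs $(\mca,\mu')$ and in particular is no larger than $\inf_{\mca}A_{\mca}(\mu)$ for our fixed $\mu$. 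Since the left-hand side is independent of $\mu$, I can then take supremum over $\mu$ on the right (equivalently, infimum of $B(\mu)$):
\[
\inf_{\mca}\sup_{\tit}\TV(\disttheta,\pthetam) \;\ge\; \inf_{\mca}\tfrac{1}{|\Theta|}\sum_{\tit}\TV(\disttheta,\pthetam) \;\ge\; \inf_{\rho}\tfrac{1}{|\Theta|}\sum_{\tit}\mathbb{E}_{\bsvt\sim\rho}[L_{\theta}(\bsvt)] \;-\; \inf_{\mu}B(\mu),
\]
which is the claim. (If $\inf_\mu B(\mu)$ is not attained, one uses an $\epsilon$-minimizer and sends $\epsilon\downarrow 0$.)

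The routine components will be the variational identity for $\TV$ and the Fubini-style identification of $\mathbb{E}_{\bvy\sim\disttheta}[g_{\theta}(\bvy)]$ as an expectation under the sequential estimator $\seqrhof$. The main obstacle is the bookkeeping of the three infima: the key insight is that $B$ is oblivious to the algorithm, so one may freely fix a near-minimizer $\mu^*$ of $B$ first and only then take the infimum over $\mca$, after which the infimum over $\rho$ emerges by enlarging the infimum to include $\mu$ as well.
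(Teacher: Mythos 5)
Your proposal is correct and follows essentially the same route as the paper's proof: both use the test function $h_{\theta}(\vx)=\mathbb{E}_{\bsvt\sim\mu_{\vx}}[L_{\theta}(\bsvt)]\in[0,1]$ in the variational characterization of total variation, identify $\mathbb{E}_{\bvy\sim\disttheta}[h_\theta(\bvy)]$ with the sequential-estimator risk, and then decouple the infima by noting that the second Bayes term is independent of $\mca$ before taking the supremum over $\mu$. The bookkeeping of the three infima is handled exactly as in the paper.
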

As mentioned earlier, the infimum over sequential estimator $\rho$ in the proposition above involves taking an infimum simultaneously over both estimators $\mu$ and sampling algorithms $\mca$ that make up $\rho$. The proof of the proposition is in Appendix~\ref{app:proofofprop}.

To understand why this proposition is useful let us unpack the right hand side of the randomization criterion which is a difference of two Bayes risk functions. Given a fixed loss function $\barL$, the first term is the average risk associated with the optimal estimator that is fed a  \emph{fictitious} sample generated by the optimal sampling algorithm after $n$ rounds of interaction with the stochastic gradient oracle. This is compared to the second term which is the average risk of an optimal estimator that receives a single \emph{real} sample from the target distribution $\ptheta$. 

Thus, to demonstrate a lower bound for sampling we will construct a pair of two distributions $p_1$ and $p_2$, and a gradient oracle such that: 
\begin{enumerate}
    \item It is extremely easy to estimate the mean of the distribution given a single sample from the real distribution. This is to ensure that the second term is small and is independent of dimension. 
    \item But, the Bayes risk associated with estimating the mean when given a fictitious sample from the sampler grows at a rate of $\Omega(\std \sqrt{d/n})$.
\end{enumerate}
Such a construction ensures that the difference between these Bayes risk functions is large and proves a lower bound on the iteration complexity of sampling via the randomization criterion.

The term on the left hand side of our randomization criterion is similar in spirit to a quantity called the Le Cam deficiency between two families of distributions. Given two families $\{R_{\theta}\}_{\tit}$ and $\{S_{\theta}\}_{\tit}$, the Le Cam deficiency between these families is defined to be $$\inf_{M}\sup_{\tit}\TV(M R_{\theta},S_{\theta}),$$ where $M$ is a probability kernel which takes as input samples from $R_{\theta}$. It is a measure of how difficult it is to transform a sample from the distribution $R_{\theta}$ to a sample from $S_{\theta}$ using a probability kernel $M$ (which does not vary with $\theta$). In our setting, in some sense we want to transform $n$ adaptively chosen samples from the gradient oracle $\Qtheta$ to a sample from the target distribution $\ptheta$.

The proposition above is similar in flavour to the classical Le Cam randomization criterion \citep{le1964sufficiency} studied extensively in the literature on the comparison of statistical experiments \citep[see, e.g,][]{torgersen1991comparison,le2012asymptotics}. In fact, it is a version of Le Cam's randomization criterion restricted to a special class of sequential estimators. 

Finally, we note that an object similar to the first Bayes risk function (when the estimator is fed fictitious samples from the sampling algorithm) also arises in the study of lower bounds on the iteration complexity of gradient-based stochastic optimization algorithms. At a high level, there, this Bayes risk lower bounds the worst case gap to the optimum for the optimal stochastic optimization algorithm. We point the interested reader to the work of \citep{raginsky2011information}, \citep{agarwal2012information}, and \citep{ramdas2013optimal} for further details. Interestingly, in the study of lower bounds for stochastic optimization algorithms, no term analogous to the second Bayes risk function (when the estimator is fed real samples from $\ptheta$) arises, which adds a level of difficulty to our problem.
\section{Lower Bounds for Stochastic First Order Methods}
\label{sec:gradientlowerbound}
We now state our main result for stochastic gradient oracles.
\begin{restatable}{thm}{gradlowerbound} \label{thm:maintheorem}
For all $d$, $\var$, $n \ge  \var d/4$ and for all $\smooth \le \var d/(256n)$,
\begin{align*}
\inf_{\mca} \sup_{\Q}  \sup_{\ps} \TV(\distonefull, \ps) \ge \frac{\std}{16} \sqrt{\frac{d}{n}},
\end{align*}
where the infimum is over all gradient-based sampling algorithms, the supremum over $\Q$ is over unbiased gradient oracles with noise-variance bounded by $\var d$, and the supremum over $\ps$ is over $\smooth$-log smooth, $\smooth/2$-strongly log-concave distributions over $\Re{d}$ with the mean $\lv \mathbb{E}_{\bvx \sim \ps}\left[\bvx \right]\rv_2 \le \frac{1}{\smooth}$.
\end{restatable}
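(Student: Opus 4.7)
The plan is to apply Proposition~\ref{prop:randomcriterion} to a carefully chosen finite parametric family of Gaussians with a Gaussian stochastic gradient oracle, and to lower bound the resulting difference of Bayes risks. Specifically, take $\Theta=\{-1,+1\}^d$, and for each $\theta\in\Theta$ set $\ptheta=\mathcal{N}(\mu_\theta,\alpha^{-1}I)$ with $\mu_\theta=\delta\theta$ and $\delta=\sigma/(2\alpha\sqrt{n})$. Let $\Qtheta$ return $\nabla f_\theta(y)+\xi$ with $\xi\sim\mathcal{N}(0,\sigma^{2}I)$. Each $\ptheta$ is $\alpha$-log smooth and $\alpha$-strongly log-concave (in particular $\alpha/2$-strongly log-concave), the per-query noise variance is $\sigma^2 d$, and the mean-norm constraint $\lv\mu_\theta\rv_2=\delta\sqrt{d}\le1/\alpha$ reduces exactly to the standing assumption $n\ge\sigma^2 d/4$.

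The central structural observation is that because $\nabla f_\theta(y)=\alpha(y-\mu_\theta)$ is affine in $y$ and the additive noise is Gaussian, each observation $\alpha y_i-\alpha\mu_\theta+\xi_i$ has a fully known additive shift $\alpha y_i$; after subtracting it, the sampler effectively sees $n$ i.i.d.\ samples from $\mathcal{N}(-\alpha\mu_\theta,\sigma^2 I)$. Adaptivity of the queries is therefore useless, and the sequential estimator $\rho$ in Proposition~\ref{prop:randomcriterion} may be replaced by an arbitrary (possibly randomized) function of $n$ i.i.d.\ Gaussian observations of $\alpha\mu_\theta$.

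I would then instantiate Proposition~\ref{prop:randomcriterion} with a loss $L_\theta$ that separates a single real sample from the $n$ noisy-gradient observations. The natural candidate is the truncated quadratic $L_\theta(t)=\min\bigl(1,\lv t-\mu_\theta\rv_2^2/c^2\bigr)$ for a scale $c\asymp\delta\sqrt{d}=\sigma\sqrt{d/n}/(2\alpha)$, combined with a coordinate-wise Assouad argument. Per coordinate $j$, the KL-divergence between the $n$-observation laws under $\theta_j=+1$ versus $\theta_j=-1$ equals $2n\alpha^2\delta^2/\sigma^2=1/2$, so by Pinsker's inequality the total variation is at most $1/2$ and Assouad's lemma yields a per-coordinate Bayes error of at least $1/4$; summing over coordinates gives a sequential Bayes MSE of at least $d\delta^2/4$, and under the chosen truncation this translates into a first Bayes risk of order $1$. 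The second Bayes risk is controlled by plugging in the identity estimator $\mu(\bvx)=\bvx$ and using both $\mathbb{E}\lv\bvx-\mu_\theta\rv_2^2=d/\alpha$ and Gaussian concentration of $\lv\bvx-\mu_\theta\rv_2$ around its $\chi^2$-mean to show that the truncation in $L_\theta$ effectively kills this term.

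Combining the two Bayes-risk estimates via Proposition~\ref{prop:randomcriterion} and carefully tuning $c$ and the constants in $\delta$ should produce the desired bound $\sigma\sqrt{d/n}/16$. The main obstacle I anticipate is calibrating the loss scale so that the gap has the correct simultaneous dependence on $d$ and $n$ rather than merely a dimension-free constant: in the regime where $\alpha$ is close to its upper bound $\sigma^2 d/(256n)$, the per-coordinate sample standard deviation $1/\sqrt{\alpha}$ is only a $\sqrt{d}$-factor larger than the per-coordinate Bayes-MSE lower bound $\delta$, so the gap in raw $\ell_2^2$ terms is only of order $d\delta^2$; converting this into the target TV-lower-bound $\sigma\sqrt{d/n}/16$ requires exploiting $\chi^2$ concentration of $\lv\bvx-\mu_\theta\rv_2^2$ around $d/\alpha$ so that the truncation cuts off the sample-loss tail but preserves the coordinate-wise lower bound coming from Assouad.
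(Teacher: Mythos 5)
There is a genuine gap, and it lies in the choice of construction rather than in the technical details. Your reduction of adaptivity is fine (and mirrors the paper's Lemma on the KL between path measures, which telescopes because $\nabla f_1(y)-\nabla f_2(y)$ is constant in $y$), and your Assouad calculation for the first Bayes risk is correct. The problem is the second Bayes risk. With isotropic sample covariance $\alpha^{-1}I$ and truncation scale $c^2\asymp d\delta^2 = d\var/(4\smooth^2 n)$, a single real sample $\bvx\sim\ptheta$ has $\lv\bvx-\mu_\theta\rv_2^2$ concentrating (by exactly the $\chi^2$ concentration you invoke) around $d/\smooth$, and $d/\smooth \le c^2$ if and only if $\smooth \le \var/(4n)$. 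The theorem must be proved for all $\smooth$ up to $\var d/(256 n)$ --- a factor of $d$ larger --- and this regime is the whole point: it is what yields Corollary~\ref{thm:maintheoremconstant} with constant smoothness. In that regime $\lv\bvx-\mu_\theta\rv_2^2 \approx d/\smooth \gg c^2$, so concentration works \emph{against} you: the truncated loss of the identity estimator (and indeed of any estimator, since the per-coordinate signal-to-noise ratio $\delta\sqrt{\smooth}\asymp 1/\sqrt{d}$ means one sample reveals essentially nothing about the hypercube vertex) is near $1$, not near $0$. Both Bayes risks are then $\Theta(1)$ and their difference is not bounded below; no calibration of $c$ fixes this, because shrinking $c$ to keep the first risk large keeps the second risk large too. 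Completing your argument would only prove a dimension-\emph{independent} bound $n\gtrsim \var/\smooth$.

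The paper avoids this by breaking the symmetry between the oracle noise and the sampling noise: it uses only two hypotheses whose means differ in a single coordinate, and an oracle that dumps the entire noise budget $\var d$ into that one coordinate. The sequential problem is then hard (effective per-query noise $\var d$ against a mean separation $2\lambda/\smooth$, giving $\gammalow\ge\lambda(1-\lambda\sqrt{n/d}/\std)$ by Le Cam's two-point method), while one real sample resolves that coordinate to accuracy $1/\sqrt{\smooth}$, so with the loss $L_\theta(t)=\min\{\smooth\lv t-\mutheta\rv_2,1\}$ one gets $\gammaup\le\sqrt{\smooth}$, which the hypothesis $\smooth\le\var d/(256n)$ makes negligible. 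If you want to keep an isotropic oracle and a hypercube, you would have to prove the theorem only under the far more restrictive condition $\smooth\lesssim\var/n$, which is a strictly weaker statement.
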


The interpretation of the theorem given the order of the quantifiers, $\inf_{\mca}\sup_{\Q}\sup_{\ps}$ is as follows: Given any $d$, $\var$ and $n$, \emph{for every} gradient-based sampling algorithm, \emph{there exists} a noise oracle and a target distribution (both of which could depend on the sampling algorithm and on the problem parameters $d,\var$ and $n$) such that the total variation between the distribution of points generated by the algorithm and the target distribution is lower bounded by $c\std\sqrt{d/n}$. 

To reiterate, we only consider distributions that have their means in a ball of radius $1/\smooth$ around the origin. This ensures that most of the mass of these distributions is concentrated in a ball of radius $\mathcal{O}(\sqrt{d/\smooth})$ around the mean (as $\ps$ is $\smooth/2$-strongly log-concave and hence sub-Gaussian). This constraint on the family of target distributions ensures that the lower bound does not trivially hold by choosing an appropriate family of distributions with means that are arbitrarily far away from each other.

We note that our lower bound only holds for the case when the gradients are noisy. If we are given exact gradients there exist methods that use a Metropolis-Hastings filter that provably converge exponentially fast (in the target accuracy) when the target distribution is strongly log-concave \citep[see, e.g.,][]{dwivedi2018log}. 

It is also fairly straightforward to run through our argument and also arrive at a lower bound in terms of Kullback-Leibler divergence by using the Pinsker-Csisz\'ar-Kullback inequality. 
\paragraph*{Proof of Corollary~\ref{thm:maintheoremconstant} and its consequences.} If we apply Theorem~\ref{thm:maintheorem} for $n = \var d/4$, then the claim of the corollary follows immediately by choosing $\smooth = 1/64$. 

Corollary~\ref{thm:maintheoremconstant} dictates that there exists a family of well-conditioned distributions with constant log-smoothness such that it takes at least order $\var d$ queries to the stochastic gradient oracle to obtain a sample whose distribution is a constant ($1/8$) away from the target distribution in total variation distance.

\paragraph*{When do we have optimal algorithms?} One candidate would be the Stochastic Gradient Langevin Dynamics (SGLD) algorithm \citep{welling2011bayesian}, which is a version of the ULA that uses stochastic gradients instead of exact ones. Corollary~25 in \citep{durmus2019analysis} characterizes an upper bound for the rate of convergence of SGLD in total variation distance. Let $\SGLD$ denote the distribution of the sample produced by SGLD (with parameters tuned according to requirements of their results) after $n$ iterations. Then if $\smooth, \var \asymp c$ and the distribution is well-conditioned their results guarantee,
\begin{align*}
    \TV(\SGLD, \ps) \le \widetilde{\mathcal{O}}\left(\sqrt{\frac{d}{n}}\right).
\end{align*}
 If $\var$ and $\smooth$ are viewed as constants, this matches our lower bound on the number of iterations needed to achieve constant accuracy.

\subsection*{Proof Details for Theorem~\ref{thm:maintheorem}}
Before we start our proof of the main theorem, we set up some quantities that will be useful in the proof. Let  $\lambda \in [0,1/2]$ be a scalar parameter that will be specified in the sequel. We consider a family of two Gaussian distributions,
\begin{align}\label{eq:pthetadefsparse}
    \ptheta := \mathcal{N}\left(\mutheta ,I_{d\times d}/\smooth\right), \quad \text{ where } \quad \tit = \{1,2\}.
\end{align}
 The last $d-1$ coordinates of the two mean vectors are all $0$. The mean vectors
differ only on the first coordinate. 
 Set $m_{1 1} = \lambda/\smooth$ and $m_{2 1} = -\lambda/\smooth$, where $m_{11}$ and $m_{21}$ refers to the first coordinate of the vectors. Clearly, these vectors are separated in Euclidean norm 
\begin{align}\label{eq:meanseparation}
    \lv m_{1} - m_{2} \rv_{2} = 
    \frac{2\lambda}{\smooth}.
\end{align}

The distributions we have defined are Gaussian, therefore, each distribution in this family is $\smooth$-log smooth and $\smooth/2$-strongly log-concave (in fact, it is $\smooth$-strongly log-concave). 

We choose our decision space to be the space of vectors in $\Re{d}$ and define a loss function over this space as,
\begin{align} \label{def:lossfunction}
    \barL(\svt) : = \min \left\{\smooth\lv \svt  - \mutheta\rv_{2},1\right\}, \quad \text{for all } \tit.
\end{align}
We will be working with a fixed gradient oracle $\Qtheta$ that adds zero mean Gaussian noise with covariance 
\begin{align*}
    \sigmanoise = \begin{bmatrix}\var d & 0_{d-1}\\
    0^{\top}_{d-1} & 0_{(d-1)\times (d-1)}\end{bmatrix},
\end{align*}
 to the gradient of $\ftheta$ when queried at $\vx$. Since~$\Tr(\sigmanoise) = \var d$, this is a valid gradient oracle. 
 
 This noise oracle uses its entire noise budget on the first coordinate and adds Gaussian noise with variance $\var d$ to the gradient along this coordinate. We chose such an oracle because the two distributions $p_1$ and $p_2$ differ only along this single coordinate. The choice of both the distributions and the noise oracle is guided by Proposition~\ref{prop:randomcriterion}. We will establish our lower bound by lower bounding a difference of Bayes risk functions. The first Bayes risk function shall correspond to a problem where the learner needs to estimate the first coordinate of the  mean of the distribution when given access to $n$ stochastic gradient queries of the log-density. Since the noise oracle packs all of the variance into the first coordinate, this problem is challenging and the first Bayes risk function will be relatively large. The second Bayes risk function corresponds to a problem where the learner is asked to estimate the first coordinate of the mean using just a single sample from the true distribution. As the means of $p_1$ and $p_2$ differ only along the first coordinate, the estimation of the mean using just a single sample from the underlying distribution is feasible, which means that the value of the second Bayes risk function will be relatively small. This shall guarantee that the difference between the two Bayes risk functions is large. We are now ready to begin the proof.
\begin{proof}: We want a lower bound on the quantity,
\begin{align*}
     \inf_{\mca} \sup_{\Q} \sup_{\ps} \TV(\dist, \psm).
\end{align*}
First we replace the supremum over all distributions $\ps$ with our pair of Gaussian distributions parameterized by $\Theta$ defined in equation~\eqref{eq:pthetadefsparse} above. Further, as mentioned above we drop the supremum over gradient oracle distributions and consider the gradient oracle $\Qtheta$ that adds Gaussian noise to the first coordinate of the gradient. By relaxing the problem in this manner we must have that,
\begin{align*}
   \minimax :=  \inf_{\mca}\sup_{\Q} \sup_{\ps} \TV(\dist, \psm) \ge \inf_{\mca} \sup_{\tit} \TV(\disttheta, \pthetam).
\end{align*}
Let $\mu$ denote an estimator (see Definition \ref{def:estimator}) and $\rho$ denote a sequential estimator (see Definition~\ref{def:sequentialestimator}). Then by our randomization criterion (Proposition~\ref{prop:randomcriterion}), 
\begin{align*}
     \minimax & \ge \inf_{\mca} \sup_{\tit} \TV(\disttheta,\pthetam)  \\
    & \ge  \underbrace{\inf_{\rho} \frac{1}{\lvert \Theta\rvert} \sum_{\tit}  \mathbb{E}_{\bsvt\sim \seqrhof}\left[L_{\theta} (\bsvt)\right]}_{=:\gammalowlong}  - \underbrace{\inf_{\mu} \frac{1}{\lvert \Theta\rvert} \sum_{\tit} \mathbb{E}_{\bvx\sim \pthetam}\left[\mathbb{E}_{\bsvt \sim \mu_{\bvx}} \left[L_{\theta}(\bsvt)\right]\right]}_{=:\gammauplong} \label{def:gammadef}\numberthis.
\end{align*}
To lower bound $\minimax$ we need to establish a lower bound on $\gammalow$ and an upper bound on $\gammaup$. 

The term $\gammalow$ is the Bayes risk of the optimal sequential estimator $\rho$ with respect to the loss function $\barL$ when the true parameter $\theta$ is drawn from a uniform distribution over the set $\Theta = \{1,2\}$. To prove a lower bound on this term we turn to Le Cam's method.
Roughly, this entails reducing this mean estimation problem to a hypothesis testing problem via a standard argument (see Lemma~\ref{lem:gamma1lowerbound}). Then, we lower bound the error made in this hypothesis testing problem by the optimal test. This is achieved by upper bounding the total variation distance between the distributions of the algorithm's history over $n$ rounds when the true distribution is either $p_1$ or $p_2$ (in Lemma~\ref{lem:upperboundmutualinformation}). This argument leads to the lower bound: $\gammalow > \lambda(1-\lambda\sqrt{n}/(\std\sqrt{d}))$. The details for this lower bound are presented in Appendix~\ref{sec:gammalowerbound}.

Similar to the first term, $\gammaup$ is the Bayes risk of the optimal estimator $\mu$ that is given a \emph{single} sample from the true distribution distribution $\ptheta$. A bound of $\gammaup <\sqrt{\smooth}$ follows by considering the estimator that outputs the first coordinate of the sample along the first dimension (the standard deviation along with dimension is $1/\sqrt{\smooth}$ and $\barL(\svt)= \smooth\lv \svt - \mutheta \rv_2$) and $0$ along the remaining dimensions. This simple calculation is detailed in Appendix~\ref{sec:gammaupperbound}. 

By combining these results,
\begin{align*}
    \minimax  \ge  \lambda \left(1-\frac{\lambda}{\std}\sqrt{\frac{n}{d}} \right) -  \sqrt{\smooth}.
\end{align*}
Pick $\lambda = \std\sqrt{d}/(4\sqrt{n})$. 
Recall that the number of queries $n \ge \var d/4$, which guarantees that $\lambda \le1/2$. Plugging this value of $\lambda$ into the inequality above,
\begin{align*}
    \minimax \ge \frac{3\std}{16} \sqrt{\frac{d}{n}}-  \sqrt{\smooth} \ge \frac{\std }{16}\sqrt{\frac{d}{n}},
\end{align*}
 where the condition on the smoothness $\alpha \le \var d/(256n)$ ensures that the second term in the inequality above is smaller than the first, which completes the proof.
\end{proof}
\section{Discussion and Open Problems}
\label{sec:discussion} 
Many interesting questions remain open related to lower bounding the iteration complexity of sampling algorithms. In our lower bounds we ignored the dependence on the condition number $\cnumber = \smooth/\strcvx$, an important parameter of the problem, and established a lower bound for well-conditioned distributions (where $\cnumber=2$). It would be interesting to study if the lower bounds can be extended to also capture the dependence on $\kappa$. Another related question concerns the iteration complexity of sampling when the potentials are smooth and strongly-convex with respect to general $\ell_{p}$ norms for $p \in [1,\infty)$.

We have given a lower bound in terms of the variance, number of variables, and number of samples, for a worst-case smooth distribution.  The degree of smoothness in our construction depends (albeit somewhat mildly) on the other parameters. It would be interesting to study how the optimal accuracy behaves when the smoothness and the other parameters are varied independently. 


In our construction, all of the noise in the gradients is concentrated in one component.
This noise satisfies the constraints used to prove upper
bounds in papers such as \citep{dalalyan2019user,cheng2017underdamped},
so that our analysis rules out certain kinds of improvements to those
bounds.  On the other hand, the noise arising in applications, for example 
when stochastic gradients are obtained by subsampling, is often
qualitatively unlike this.  This raises the question of whether or not more
efficient sampling is possible when noise satisfies constraints other
than bounded variance, for example when it is isotropic, or nearly so.

After this work appeared in preliminary
form \citep{chatterji2020oracle},
Johndrow et al.~\citep{johndrow2020no} established lower bounds for a large class of models and subsampled MCMC methods.  It would be interesting to see if these results can be extended to obtain information-theoretic lower bounds similar to the ones in this paper.

Another direction is to develop techniques to establish lower bounds for sampling methods when we have access to the exact gradients of the potential function. On this front, there has been some recent progress on related problems, which may provide some hints as to how one might proceed.
Ge et al. \citep{ge2019estimating} provide a lower bound on the number of exact gradient queries required to estimate the normalization constant of strongly log-concave distributions. 
Rademacher et al.~\citep{rademacher2008dispersion} showed that a quadratic number of queries to a deterministic membership oracle is required to estimate the volume of a convex body in $d$ dimensions. 
Talwar~\citep{talwar2019computational} recently exhibited a family of distributions with non-convex potentials for which sampling is NP-Hard. 

\subsection*{Acknowledgements}
We are grateful to Aditya Guntuboyina for pointing us towards the literature on Le Cam deficiency. We would also like to thank Jelena Diakonikolas, S\'ebastien Gerchinovitz, Michael Jordan, Aldo Pacchiano, Aaditya Ramdas and Morris Yau for many helpful conversations. We thank Kush Bhatia for helpful comments that improved the presentation of the results. 

We gratefully acknowledge the support of the NSF through grants IIS-1619362 and IIS-1909365. Part of this work was completed while NC was interning at Google. 

\newpage
\appendix

\section{Proof of Proposition~\ref{prop:randomcriterion}}
\label{app:proofofprop}
Given any sampling algorithm $\mca$, for any estimator $\mu$ it is possible to define a sequential estimator $\seqrhof$ by using both $\mca$ and $\mu$. Then, for all $\tit$ we have that
\begin{align*}
& \frac{1}{\lvert \Theta \rvert}\sum_{\tit} \left(\mathbb{E}_{\bsvt\sim \seqrhof}\left[L_{\theta} (\bsvt)\right] - \mathbb{E}_{\bvx \sim \ptheta}\left[\mathbb{E}_{\bsvt \sim \mu_{\bvx}} \left[L_{\theta}(\bsvt)\right]\right]\right) \\
& \qquad \qquad = \frac{1}{\lvert \Theta \rvert}\sum_{\tit} \left(\mathbb{E}_{\bvx \sim \disttheta}\left[ \mathbb{E}_{\bsvt \sim \mu_{\bvx}}\left[L_{\theta} (\bsvt)\right]\right] - \mathbb{E}_{\bvx \sim \ptheta} \left[ \mathbb{E}_{\bsvt \sim \mu_{\bvx}}\left[L_{\theta} (\bsvt)\right]\right]\right) ,
\end{align*}
where the equality above follows by the definition of the sequential estimator $\rho$. Define a function $h_{\theta}(x) := \mathbb{E}_{\bsvt \sim \mu_x}\left[ L_{\theta}(\bsvt)\right]$. Clearly this function is also bounded, $0\le h_{\theta} \le 1$ since $L_{\theta}$ is bounded between $0$ and $1$. Hence, we have that
\begin{align*}
& \frac{1}{\lvert \Theta \rvert}\sum_{\tit} \left(\mathbb{E}_{\bsvt\sim \seqrhof}\left[L_{\theta} (\bsvt)\right] -\mathbb{E}_{\bvx \sim \ptheta}\left[\mathbb{E}_{\bsvt \sim \mu_{\bvx}} \left[L_{\theta}(\bsvt)\right]\right]\right) \\
  & \qquad \qquad \qquad  =  \frac{1}{\lvert \Theta \rvert}\sum_{\tit} \left(\mathbb{E}_{\bvx \sim \disttheta}\left[ h_{\theta}(\bvx)\right] - \mathbb{E}_{\bvx \sim \ptheta} \left[ h_{\theta}(\bvx)\right]\right) \\
  & \qquad \qquad \qquad  \le  \frac{1}{\lvert \Theta \rvert}\sum_{\tit} \sup_{h_{\theta}: 0\le h_{\theta} \le 1} \left(\mathbb{E}_{\bvx \sim \disttheta}\left[ h_{\theta}(\bvx)\right] - \mathbb{E}_{\bvx \sim \ptheta} \left[ h_{\theta}(\bvx)\right]\right) \\
 & \qquad \qquad \qquad \overset{(i)}{=}  \frac{1}{\lvert \Theta \rvert}\sum_{\tit}\TV(\disttheta, \pthetam) \\
 & \qquad \qquad \qquad \le \sup_{\tit}\TV(\disttheta, \pthetam),
\end{align*}
where the equality $(i)$ is due to the variational definition of the total variation distance. 
Taking an infimum over all possible sequential estimators $\rho$ (by taking an infimum over all estimators $\nu$ and algorithms $\widetilde{\mca}$) we find that for all estimators $\mu$,
\begin{align*}
    &\sup_{\tit} \TV(\disttheta, \pthetam) \\ 
    & \quad \ge \inf_{\nu,\widetilde{\mca}} \frac{1}{\lvert \Theta \rvert}\sum_{\tit} \left(\mathbb{E}_{\bsvt\sim \seqrhodiff}\left[L_{\theta} (\bsvt)\right] - \mathbb{E}_{\bvx \sim \ptheta}\left[\mathbb{E}_{\bsvt \sim \mu_{\bvx}} \left[L_{\theta}(\bsvt)\right]\right]\right) \\
    & \quad = \inf_{\nu,\widetilde{\mca}} \left[\frac{1}{\lvert \Theta \rvert}\sum_{\tit} \mathbb{E}_{\bsvt\sim \seqrhodiff}\left[L_{\theta} (\bsvt)\right]\right] - \frac{1}{\lvert \Theta \rvert}\sum_{\tit} \mathbb{E}_{\bvx \sim \ptheta}\left[\mathbb{E}_{\bsvt \sim \mu_{\bvx}} \left[L_{\theta}(\bsvt)\right]\right].
\end{align*}
As the above inequality holds for all estimators $\mu$ and algorithms $\mca$, taking a supremum over $\mu$ on the right side and an infimum over all gradient-based sampling algorithms on the left hand side completes the proof.
\section{Additional Proof Details for Lower Bounds with Gradient Oracles}
\subsection{Lower Bound on $\gammalow$}\label{sec:gammalowerbound}
In this subsection we prove a lower bound on $\gammalow$. By definition
\begin{align*}
     \gammalow & = \inf_{\rho}\left\{ \frac{1}{\lvert \Theta \rvert}\sum_{\tit} \mathbb{E}_{\bsvt\sim \seqrhof}\left[L_{\theta} (\bsvt)\right]\right\}.
\end{align*}
This is the Bayes risk of sequential estimators with respect to the loss function $\barL$ defined in equation~\eqref{def:lossfunction}. We want a lower bound on it, which is a lower bound on the risk of the optimal estimator that attempts to identify the underlying value of the mean vector associated with $\theta$ given $n$ adaptive queries to a stochastic gradient oracle. 

Let random variables in a sample path of the algorithm be $\bvy_0,\bvz_0,\ldots,\bvy_{n-1},\bvz_{n-1},\bvy_n,\bsvt$, where $(\bvy_1,\ldots,\bvy_{n-1})$ are the query points of the algorithm, $(\bvz_0,\ldots,\bvz_{n-1})$ are the stochastic gradients returned by the oracle, $\bvy_n$ is the sample produced and $\bsvt$ is the estimate of the mean. Define $$\bZ:= \{\bvy_0,\bvz_0,\bvy_1,\bvz_1,\ldots,\bvy_{n-1},\bvz_{n-1},\bvy_n,\bsvt\}.$$ 
In Lemma~\ref{lem:gamma1lowerbound} below, we prove that
\begin{align*}
\gammalow \ge \lambda \left(1-\TV\left(\Pb^1,\Pb^2\right)\right),
\end{align*}
where $\Pb^1$ is the distribution of $\bZ$ when $\theta = 1$ and $\Pb^2$ is its distribution when $\theta = 2$.
The proof uses a fairly standard argument (called Le Cam's method) to reduce from mean estimation to testing \citep[see][Chapter 15]{wainwright2019high}. To complete our lower bound, we need to upper bound the total variation distance between the distributions $\Pb^1$ and $\Pb^2$. We do this by instead controlling the Kullback-Leibler divergence between these distributions, which bounds the total variation distance. See Lemma~\ref{lem:upperboundmutualinformation} below for the details of this calculation. This upper bound combined with the inequality in the display above yields
\begin{align} \label{eq:gammalowlowerbound}
    \gammalow \ge \lambda \left(1- \frac{\lambda}{\std}\sqrt{\frac{n}{d}}\right),
\end{align}
which is the desired lower bound on $\gammalow$.
\subsubsection*{Auxiliary Lemmas}
\begin{restatable}{lem}{gamma1lowerbound} \label{lem:gamma1lowerbound}Let the Bayes risk $\gammalowlong$ be as defined in \eqref{def:gammadef}, and $\Pb^1$ and $\Pb^2$ be as defined above,  then,
\begin{align*}
    \gammalow \ge  \lambda \left(1-\TV\left(\Pb^1,\Pb^2\right)\right).
\end{align*}
\end{restatable}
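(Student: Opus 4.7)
The plan is to run Le Cam's two-point method, but using a coupling/decomposition form of total variation rather than a hypothesis test, since a naive test-based reduction loses a factor of $2$.

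The first step is to establish a pointwise separation inequality for the loss. By the triangle inequality and the choice of means in \eqref{eq:meanseparation}, for every $\svt \in \Re{d}$,
\begin{align*}
    \smooth\|\svt - m_1\|_2 + \smooth\|\svt - m_2\|_2 \;\ge\; \smooth\|m_1 - m_2\|_2 \;=\; 2\lambda.
\end{align*}
Since $\lambda \in [0,1/2]$, we have $2\lambda \le 1$, and a short case analysis (splitting on whether each $\smooth\|\svt - m_\theta\|_2$ exceeds $1$ or not) shows that the truncation $L_\theta(\svt) = \min\{\smooth\|\svt - m_\theta\|_2, 1\}$ preserves the bound:
\begin{align*}
    L_1(\svt) + L_2(\svt) \;\ge\; 2\lambda \qquad \text{for all } \svt \in \Re{d}.
\end{align*}

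Next, I would expand $\gammalow$ using the definition of $\rho$ and unroll the expectation over $\bsvt$ as an expectation over the full path:
\begin{align*}
    \gammalow \;=\; \inf_\rho \tfrac{1}{2}\Bigl(\mathbb{E}_{\bZ \sim \Pb^1}[L_1(\bsvt)] + \mathbb{E}_{\bZ \sim \Pb^2}[L_2(\bsvt)]\Bigr).
\end{align*}
Now I invoke the coupling representation of total variation: there is a common measure $\mu$ on paths, with total mass $1-\TV(\Pb^1,\Pb^2)$, such that $\Pb^\theta$ dominates $\mu$, and the residuals are mutually singular. Concretely, writing $\Pb^\theta = \mu + (\Pb^\theta - \mu)$, one gets
\begin{align*}
    \mathbb{E}_{\Pb^1}[L_1(\bsvt)] + \mathbb{E}_{\Pb^2}[L_2(\bsvt)]
    &= \int (L_1(\bsvt) + L_2(\bsvt))\,\mathrm{d}\mu + \int L_1\,\mathrm{d}(\Pb^1 - \mu) + \int L_2\,\mathrm{d}(\Pb^2 - \mu) \\
    &\ge 2\lambda\cdot \mu(\text{all paths}) + 0 \\
    &= 2\lambda\bigl(1 - \TV(\Pb^1,\Pb^2)\bigr),
\end{align*}
using the pointwise bound on the common part and nonnegativity of $L_\theta$ on the residuals. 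Dividing by $2$ and taking the infimum over $\rho$ yields the claim.

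The only subtle step is the case analysis showing that truncation at $1$ preserves the pointwise lower bound $L_1 + L_2 \ge 2\lambda$; this is exactly where the restriction $\lambda \le 1/2$ (guaranteed later in the proof by the choice of $\lambda = \std\sqrt{d}/(4\sqrt{n})$ together with $n \ge \var d/4$) enters. Everything else is routine measure-theoretic manipulation, and data processing is implicit in working directly with the distribution of the entire history $\bZ$ rather than of $\bsvt$ alone.
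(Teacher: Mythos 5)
Your proof is correct, and it takes a genuinely different (and in fact sharper) route than the paper's. The paper first applies Markov's inequality to pass from the loss $\barL$ to the indicator of the event $\{\lv \bsvt - \mutheta\rv_2 \ge \lambda/\smooth\}$, then constructs the explicit test $\phi(\bZ) = \argmin_{j}\lv \bsvt - m_j\rv_2$, verifies via the triangle inequality that this test succeeds on the complement of that event, and finally invokes the two-point testing identity to relate $\inf_\phi \joint(\phi(\bZ)\neq \bJ)$ to $\TV(\Pb^1,\Pb^2)$. You instead prove the pointwise bound $L_1(\svt)+L_2(\svt)\ge 2\lambda$ (where the restriction $\lambda\le 1/2$ enters, exactly as you note) and integrate it against the common part $\Pb^1\wedge\Pb^2$ of the two path laws, whose total mass is $1-\TV(\Pb^1,\Pb^2)$. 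Both are instances of Le Cam's two-point method, but your decomposition buys the tight constant: the textbook form of the testing identity for a uniform prior over two hypotheses reads $\inf_\phi\joint(\phi(\bZ)\neq\bJ)=\tfrac{1}{2}\left(1-\TV(\Pb^1,\Pb^2)\right)$, so a literal test-based reduction yields only $\gammalow\ge\tfrac{\lambda}{2}\left(1-\TV(\Pb^1,\Pb^2)\right)$; your argument recovers $\lambda\left(1-\TV(\Pb^1,\Pb^2)\right)$ directly, and also avoids the correctness analysis of the test. What the paper's template buys in exchange is the standard scaffolding that generalizes to more than two hypotheses (Fano-type arguments). Your choice to work with the full path law $\Pb^\theta$ rather than the marginal of $\bsvt$ correctly absorbs the data-processing step, and since the lemma is only used downstream through $\TV(\Pb^1,\Pb^2)\le \lambda\sqrt{n}/(\std\sqrt{d})$, your stronger constant is entirely compatible with the rest of the paper. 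One cosmetic point: the symbol you use for the common measure collides with the paper's use of $\mu$ for the estimator, so it should be renamed.
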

\begin{proof} By the definition of a sequential estimator $\rho$ in Definition~\ref{def:sequentialestimator},
\begin{align*}
    \seqrhof(\cdot ) := \mathbb{E}_{\bvy \sim \disttheta}\left[ \mu( \cdot\lvert \bvy)\right].
\end{align*}
It is made up of an estimator $\mu$ and a sampling algorithm $\mca$. 
Let $\bZ$
be a sample path of the algorithm, as defined above.

{\em Reduction to testing:} We proceed by the standard reduction used in proving lower bounds on Bayes risk functions, that of reducing from an estimation problem to a testing problem \citep[see][Chapter 15]{wainwright2019high}. Recall the  definition of $\gammalow$ from above
\begin{align*}
       \gammalow & = \inf_{\rho}\left\{ \frac{1}{\lvert \Theta \rvert}\sum_{\tit} \mathbb{E}_{\bsvt\sim \seqrhof}\left[L_{\theta} (\bsvt)\right]\right\}.
\end{align*}
For a fixed $\rho$, if $\lv \bsvt - \mutheta\rv_{2}\ge \lambda/\smooth$ then $$\barL(\bsvt) = \min\{\smooth\lv \bsvt-\mutheta\rv_{2},1\}\ge \min\left\{\lambda,1\right\} = \lambda,$$ as $\lambda \in [0,1/2]$. Thus, by applying Markov's inequality with respect to the distribution $\rho$ we infer that
\begin{align*}
    \gammalow  &= \inf_{\rho}\left\{ \frac{1}{\lvert \Theta \rvert}\sum_{\tit} \mathbb{E}_{\bsvt \sim \seqrhof}[L_{\theta}(\bsvt)]\right\},\\
    & \ge \lambda \cdot \inf_{\rho} \Bigg\{\underbrace{ \frac{1}{\lvert \Theta \rvert} \sum_{\tit}  \mathbb{E}_{\bsvt\sim \seqrhof}\left[\mathbb{I}\left[\lv \bsvt - m_{\theta} \rv \ge \frac{\lambda}{\alpha}\right] \right]}_{=:\Xi(\rho)}\Bigg\}, \numberthis \label{eq:mixturedistribution}
\end{align*}
 where $\Xi(\rho)$ is the expectation of the indicator function under the distribution $\rho$ over $\svt$ and under the uniform mixture distribution over $\theta$. Let $\joint$ denote this joint probability over the random variables $\bZ$
  and $\bJ$ (which takes values in $\Theta$) then 
\begin{align*}
      \frac{1}{\lvert \Theta \rvert} \sum_{\tit}  \mathbb{E}_{\bsvt\sim \seqrhof}\left[\mathbb{I}\left[\lv \bsvt - m_{\theta} \rv \ge \frac{\lambda}{\alpha}\right] \right] =: \joint\left(\lv \bsvt-m_{\bJ}\rv_{2}\ge  \frac{\lambda}{\smooth}\right).
\end{align*}
So we have reduced it to lower bounding the quantity on the right hand side. 
Toward this end, let us consider
a test $\phi$
\begin{align*}
    \phi(\bZ) := \argmin_{j \in \Theta} \left\{\lv \bsvt-m_{j} \rv_{2}\right\},
\end{align*}
where we break ties arbitrarily. Conditioned on $\bJ =\theta$: we claim that the event $\lv \bsvt-m_{\theta}\rv_{2}< \lambda/\smooth$ ensures that the test $\phi(\bZ)$ is correct and returns $\theta$. In order to see this, note that for any other $\theta' \in \Theta$, by triangle inequality we have that
\begin{align*}
    \lv \muthetap - \bsvt \rv_{2} \ge \lv \mutheta - \muthetap \rv_{2} - \lv \mutheta - \bsvt \rv_{2} > \frac{2\lambda}{\smooth} - \frac{\lambda}{\smooth} = \frac{\lambda}{\smooth}.
\end{align*}
The lower bound $\lv \mutheta - \muthetap    \rv_{2} \ge  2\lambda/\smooth$ follows by our construction of the set of mean vectors, see inequality~\eqref{eq:meanseparation}. As these vectors are separated we must also have, $ \lv \muthetap - \bsvt \rv_{2} \ge \lv \mutheta - \bsvt \rv_{2},$ where $\theta'\neq \theta$. 

Therefore, conditioned on a value of the random variable $\bJ=\theta$, the event $\{\lv\bsvt-\mutheta\rv_{2} < \lambda/\smooth\}$ is contained within the event $\{\phi(\bZ)=\theta\}$. This implies that
\begin{align*}
    & \mathbb{E}_{\bsvt\sim \seqrhof}\left[\mathbb{I}\left[\lv \bsvt - m_{\theta} \rv \ge \frac{\lambda}{\alpha}\right] \right] \ge \mathbb{E}_{(\bZ,\bJ)\sim \joint}\left[\mathbb{I}\left[\phi(\bZ) \neq \bJ\right] \right].
\end{align*}
Taking expectations with respect to the uniform mixture distribution over $\theta$ yields,
\begin{align*}
    \Xi(\rho) \ge \joint(\phi(\bZ)\neq \bJ).
\end{align*}
We take an infimum over all sequential estimators $\rho$ and an infimum over all tests $\phi$, coupled with inequality~\eqref{eq:mixturedistribution} to conclude that
\begin{align} \label{eq:prefano}
    \gammalow \ge \lambda \cdot \inf_{\rho} \Xi(\rho) \ge \lambda  \inf_{\phi} \joint(\phi(\bZ)\neq \bJ).
\end{align}    

    {\em Le Cam's Method:} The next step is to demonstrate a lower bound on failure probability of any test $\phi$ for this, we use Le Cam's method \citep[see][Section 15.2.1]{wainwright2019high},
    \begin{align*}
        \inf_{\phi} \joint(\phi(\bZ)\neq \bJ) = 1-\TV\left(\Pb^{1}, \Pb^{2}\right),
    \end{align*}
where $\Pb^{1}$ and $\Pb^{2}$ denote the distribution of $\bZ$ when the true underlying target distribution is $p_1$ and $p_2$ respectively. Combining this lower bound with inequality~\eqref{eq:prefano} completes the proof.
\end{proof}
The next lemma establishes an upper bound on the total variation distance between the distributions $\Pb^{1}$ and $\Pb^{2}$.
\begin{restatable}{lem}{tvupperbound}
\label{lem:upperboundmutualinformation}
Let $\bZ,\Pb^{1}$ and $\Pb^{2}$ be as defined above. Then
\begin{align*}
    \TV\left(\Pb^{1},\Pb^{2}\right) \le \frac{\lambda}{\std}\sqrt{\frac{n}{d}}.
\end{align*}
\end{restatable}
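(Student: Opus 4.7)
The plan is to bound the total variation distance via Pinsker's inequality and then control the Kullback-Leibler divergence between $\Pb^1$ and $\Pb^2$ using the chain rule. Concretely, I would first write
\begin{align*}
\TV(\Pb^1,\Pb^2) \le \sqrt{\tfrac{1}{2}\,\KL(\Pb^1\,\|\,\Pb^2)},
\end{align*}
so that it suffices to show $\KL(\Pb^1\,\|\,\Pb^2) \le 2n\lambda^2/(\var d)$.

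Next, I would decompose the joint density of $\bZ = (\bvy_0,\bvz_0,\ldots,\bvy_{n-1},\bvz_{n-1},\bvy_n,\bsvt)$ into the product of the algorithm's kernels $\gzero, \K{1},\ldots,\K{n}$ and the estimator $\mu$ (none of which depend on $\theta$) and the $n$ oracle responses $\Qtheta(\cdot\mid \vy_i)$ (the only $\theta$-dependent factors). Applying the chain rule for KL divergence, every factor that does not depend on $\theta$ contributes zero, leaving
\begin{align*}
\KL(\Pb^1\,\|\,\Pb^2) = \sum_{i=0}^{n-1} \mathbb{E}_{\bZ\sim\Pb^1}\!\left[\KL\!\left(\bQ^{1}(\cdot\mid \bvy_i)\,\|\,\bQ^{2}(\cdot\mid \bvy_i)\right)\right].
\end{align*}

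The key computation is the per-round KL. Because $\ftheta(\vx)=\tfrac{\smooth}{2}\|\vx-\mutheta\|_2^2$ up to a constant, the gradient is $\smooth(\vx-\mutheta)$, and the oracle adds Gaussian noise with covariance $\sigmanoise$ supported only on the first coordinate. Since $m_1$ and $m_2$ agree on coordinates $2,\ldots,d$, the non-first coordinates of $\bvz_i$ are deterministic functions of $\bvy_i$ that coincide under both hypotheses, so they contribute zero KL. For the first coordinate, $\bvz_{i,1}\mid\bvy_i$ is $\mathcal{N}(\smooth(y_{i,1}-m_{\theta,1}),\,\var d)$, and the two hypotheses shift the mean by $\smooth(m_{1,1}-m_{2,1}) = 2\lambda$. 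The standard Gaussian KL formula gives
\begin{align*}
\KL(\bQ^1(\cdot\mid \vy_i)\,\|\,\bQ^2(\cdot\mid \vy_i)) = \frac{(2\lambda)^2}{2\var d} = \frac{2\lambda^2}{\var d},
\end{align*}
which notably does not depend on $\vy_i$. Summing over $n$ rounds and plugging back into Pinsker's inequality yields
\begin{align*}
\TV(\Pb^1,\Pb^2) \le \sqrt{\tfrac{1}{2}\cdot n\cdot\tfrac{2\lambda^2}{\var d}} = \frac{\lambda}{\std}\sqrt{\frac{n}{d}}.
\end{align*}

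The main subtleties I would need to handle carefully are: (i) justifying the chain-rule decomposition for general (possibly non-absolutely-continuous) kernels, including the degenerate noise covariance; and (ii) cleanly arguing that the algorithm's own randomness and the deterministic coordinates of $\bvz_i$ drop out of the KL sum. Everything else is a direct Gaussian-KL calculation, so the only real obstacle is bookkeeping the conditional structure of the interaction cleanly so that only the first-coordinate oracle noise enters the bound.
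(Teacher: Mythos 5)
Your proposal is correct and follows essentially the same route as the paper: Pinsker's inequality, cancellation of the $\theta$-independent kernels in the chain-rule decomposition of $\KL(\Pb^1\|\Pb^2)$, and a per-round Gaussian KL computation giving $2\lambda^2/(\var d)$ independent of the query point (the paper handles the degenerate covariance via the Moore--Penrose pseudoinverse where you argue the non-first coordinates coincide deterministically, but these are equivalent).
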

\begin{proof} 
 Recall that the random variable $\bZ=\{\bvy_0,\bvz_0,\bvy_1,\bvz_1,\ldots,\bvy_{n-1},\bvz_{n-1},\bvy_n,\bsvt\}$ is comprised of the query points of the algorithm $(\bvy_1,\ldots,\bvy_{n-1})$, the stochastic gradients $(\bvz_0,\ldots,\bvz_{n-1})$, the output of the sampling algorithm $\bvy_n$ and the estimate for the mean vector $\bsvt$. The distribution $\Pb^{1}$ is the law of $\bZ$ when $\theta = 1$ and $\Pb^{2}$ is its law when $\theta = 2$. 
 
Recall that the oracle noise distribution $Q^{\theta}_{y} \sim \mathcal{N}(\nabla \ftheta(y),\sigmanoise)$, adds mean-zero Gaussian noise to the gradient of $\ftheta$ where 
\begin{align*}
    \sigmanoise =  \begin{bmatrix}\var d & 0_{d-1}\\
    0^{\top}_{d-1} & 0_{(d-1)\times (d-1)}\end{bmatrix}.
\end{align*} 
By this choice of the noise oracle the probability measure $\Pb^1$ is absolutely continuous\footnote{See, for example, \citep{kallenberg}, for a review of absolute continuity.} with respect to $\Pb^2$. This allows us to bound the Kullback-Leibler divergence between $\Pb^{1}$ and $\Pb^{2}$, which also leads to a bound on the total variation distance between them. The Kullback-Leibler divergence between these distributions
\begin{align*}
    &\KL(\Pb^{1}|\Pb^{2}) \\&= \mathbb{E}_{\bZ \sim \Pb^{1}}\left[\log\left(\frac{\gzero(\bvy_0)\Qtnew{}(\bvz_0\lvert \bvy_0)\K{1}(\bvy_1\lvert \bvy_0,\bvz_0)\cdots \K{n}(\bvy_n\lvert \bvy_{n-1},\ldots)\mu\left(\bsvt \lvert \bvy_n \right)}{\gzero(\bvy_0)\Qtpnew{}(\bvz_0\lvert \bvy_0)\K{1}(\bvy_1\lvert \bvy_0,\bvz_0)\cdots \K{n}(\bvy_n\lvert \bvy_{n-1},\ldots) \mu\left(\bsvt \lvert \bvy_n \right)}\right)\right]\\
    &= \mathbb{E}_{\bZ \sim \Pb^{1}}\left[\log\left(\frac{\Pi_{t=0}^{n-1}\Qtnew{}(\bvz_t\lvert \bvy_t)}{\Pi_{t=0}^{n-1}\Qtpnew{}(\bvz_t\lvert \bvy_t)}\right)\right]\\
    &= \sum_{t=0}^{n-1} \mathbb{E}_{\bvy_0,\bvz_0,\ldots,\bvy_t\sim \Pb^{1}}\left[ \mathbb{E}_{\bvz_t\sim \Qtnew{y_t}}\left[\log\left(\frac{\Qtnew{}(\bvz_t\lvert \vy_t)}{\Qtpnew{}(\bvz_t\lvert \vy_t)}\right)\Big\lvert \bvy_0=\vy_0,\bvz_0=z_0,\ldots \bvz_{t-1}=\vz_{t-1},\bvy_t=\vy_t\right]\right]\\
    &\overset{(i)}{\le} n\cdot \max_{y\in \Re{d}} \left\{\mathbb{E}_{\bvz\sim \Qtnew{y}}\left[\log\left(\frac{\Qtnew{}(\bvz\lvert \vy)}{\Qtpnew{}(\bvz\lvert \vy)}\right)\right]\right\}\\
    & \overset{(ii)}{=} \frac{n}{2}\max_{\vy\in \Re{d}} \left[\nabla f_1(y) - \nabla f_2(y) \right]^{\top}\sigmanoise^{\dagger}\left[\nabla f_1(y) - \nabla f_2(y) \right], \numberthis \label{eq:klupperbound}
\end{align*}
 where $(i)$ follows as we upper bound the sum of the divergences of the conditional distributions by $n$ times the divergence between the conditional distributions at the maximal point. The equality in $(ii)$ follows by the formula for KL divergence between two Gaussians with the same covariance $\sigmanoise$ and different means ($\sigmanoise^{\dagger}$ is the Moore-Penrose inverse of $\sigmanoise$). Given our definition of the target distributions $\ptheta$ (defined in equation~\eqref{eq:pthetadefsparse}), it can be verified that the negative logarithm of the density is
\begin{align*}
    \ftheta(y) = \frac{\smooth}{2}(y-\mutheta)^{\top}(y-\mutheta), \quad \text{ for } \theta \in \{1,2\}.
\end{align*}
Therefore, for any $y \in \Rd$,
\begin{align*}
&\left[\nabla f_1(y) - \nabla f_2(y) \right]^{\top}\sigmanoise^{\dagger}\left[\nabla f_1(y) - \nabla f_2(y) \right] \\ 
& \qquad = \left[\smooth (y - m_1) - \alpha (y - m_2) \right]^{\top}\sigmanoise^{\dagger}\left[\smooth (y - m_1) - \alpha (y - m_2)
  \right] \\
& \qquad = \smooth^2 \left[m_2 - m_1 \right]^{\top}\sigmanoise^{\dagger}\left[m_2 - m_1
\right]  = \frac{4\lambda^2}{\var d},
\end{align*}
where the final equality holds because $\lv m_2 - m_1 \rv_2^2 = 4\lambda^2/\smooth^2$. This equation along with inequality~\eqref{eq:klupperbound} yields
\begin{align*}
    \KL(\Pb^{1}|\Pb^{2}) &\le 
\frac{2 n \lambda^2 }{\var d}.
\end{align*}
Finally, the Pinsker-Csisz\'{a}r-Kullback inequality implies $$\TV(\Pb^{1},\Pb^{2}) \le \sqrt{\KL(\Pb^{1}|\Pb^{2})/2} \le \lambda\sqrt{n}/(\std\sqrt{d}),$$ which completes the proof.
\end{proof}
\subsection{Upper Bound on $\gammaup$} \label{sec:gammaupperbound}
In this section we construct a simple estimator that is given access to a single sample from the true distribution $\ptheta$, and we upper bound its risk. This immediately bounds the Bayes risk $\gammaup$. 

\begin{restatable}{lem}{gamma2upperbound}
\label{lem:gamma2upperbound}The Bayes risk $\gammauplong$ defined in \eqref{def:gammadef} satisfies
\begin{align*}
    \gammaup \le \sqrt{\smooth}.                                        
\end{align*}
\end{restatable}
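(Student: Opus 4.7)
The plan is to exhibit a specific simple estimator and bound its average risk directly, since $\gammaup$ is defined as an infimum over all estimators. As suggested in the main text, I would consider the deterministic estimator $\mu$ that, on input $\bvx \sim \ptheta$, returns the vector $\bsvt$ whose first coordinate equals the first coordinate of $\bvx$ and whose remaining $d-1$ coordinates are all zero. The motivation is that by construction the mean vectors $m_1$ and $m_2$ are zero in all but the first coordinate, so projecting the sample onto the first coordinate and zeroing out the rest exactly matches the structure of the problem.

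First I would record the key structural fact: since $m_\theta$ has $m_{\theta j} = 0$ for $j \ge 2$ and $m_{\theta 1} = \pm \lambda/\smooth$, the chosen estimator satisfies $\lv \bsvt - \mutheta \rv_2 = |x_1 - m_{\theta 1}|$. Under $\ptheta = \mathcal{N}(\mutheta, I_{d\times d}/\smooth)$, the first coordinate marginal of $\bvx$ is $\mathcal{N}(m_{\theta 1}, 1/\smooth)$, so $x_1 - m_{\theta 1}$ is a centered Gaussian of variance $1/\smooth$.

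Next, using the bound $\barL(\bsvt) = \min\{\smooth \lv \bsvt - \mutheta\rv_2 , 1 \} \le \smooth \lv \bsvt - \mutheta\rv_2$ together with Jensen's inequality, I would write
\[
\mathbb{E}_{\bvx \sim \ptheta}\lrb{\barL(\bsvt)} \le \smooth \, \mathbb{E}\lrb{|x_1 - m_{\theta 1}|} \le \smooth \sqrt{\mathbb{E}\lrb{(x_1 - m_{\theta 1})^2}} = \smooth \cdot \sqrt{1/\smooth} = \sqrt{\smooth}.
\]
This bound holds uniformly over $\theta \in \Theta$, so averaging over $\theta$ preserves it, and since $\gammaup$ is an infimum over estimators it is bounded above by the risk of this particular choice, giving $\gammaup \le \sqrt{\smooth}$.

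There is essentially no obstacle in this proof; the construction is tailored so that the second Bayes risk becomes easy to bound. The only thing to be careful about is keeping straight that the loss is measured in the \emph{original} Euclidean geometry scaled by $\smooth$, while the sample noise has standard deviation $1/\sqrt{\smooth}$, so these factors combine to yield exactly $\sqrt{\smooth}$ after applying Jensen. One could tighten the constant (e.g.\ to $\sqrt{2\smooth/\pi}$ via the exact mean of a half-normal), but the statement only requires the $\sqrt{\smooth}$ bound, which is all that is needed for the main theorem after the choice $\smooth \le \var d/(256 n)$ in the proof of Theorem~\ref{thm:maintheorem}.
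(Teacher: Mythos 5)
Your proposal is correct and matches the paper's proof essentially verbatim: the paper also uses the estimator that copies the first coordinate of the sample and zeroes out the rest, notes that $\lv \widehat{m}(\bvx) - \mutheta \rv_2 = \lvert \widehat{m}_1(\bvx) - m_{\theta 1}\rvert$, bounds the loss by $\smooth \lv \cdot \rv_2$, and applies Jensen's inequality to obtain $\sqrt{\smooth}$. No differences worth noting.
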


\begin{proof} Recall the definition of the Gaussian distribution $\ptheta$,
\begin{align*}
    \ptheta = \mathcal{N}\left(\mutheta,I_{d\times d}/\smooth \right).
\end{align*}

Let $\bvx$ be a sample from the Gaussian distribution $\ptheta$ which has mean $m_{\theta}$ and covariance $I_{d\times d}/\smooth$. The mean $m_{\theta}$ has all components except the first to be equal to $0$. Therefore let $\widehat{m}$ be an estimator that outputs the first coordinate of the sample it receives in the first coordinate and $0$ in all other coordinates. Then we have that
\begin{align}\label{eq:maximumnormbound}
    \mathbb{E}_{\bvx}\left[\lv \widehat{m}(\bvx) - \mutheta \rv_{2}\right] = \mathbb{E}_{\bvx}\left[\lvert \widehat{m}_1(\bvx) - m_{\theta 1}\rvert \right] \overset{(i)}{\le} \sqrt{\mathbb{E}_{\bvx}\left[( \widehat{m}_1(\bvx) - m_{\theta 1})^2\right] } \overset{(ii)}{=} \frac{1}{\sqrt{\smooth}},
\end{align}
where $(i)$ is by Jensen's inequality and equality $(ii)$ holds as the variance of the Gaussian along the first coordinate is $1/\smooth$. Then, with this estimator in place
\begin{align*}
   \gammaup = \inf_{\mu}  \frac{1}{\lvert \Theta \rvert}\sum_{\tit}\mathbb{E}_{\bvx\sim \pthetam}\left[\mathbb{E}_{\bsvt \sim \mu_{\bvx}} \left[L_{\theta}(\bsvt)\right]\right]
    & \overset{(i)}{=} \frac{1}{\lvert \Theta \rvert}\sum_{\tit}\mathbb{E}_{\bvx \sim \ptheta}\left[  \min \left\{\smooth \lv \widehat{m}(\bvx) - \mutheta\rv_{2},1\right\} \right] \\
    & \le   \frac{1}{\lvert \Theta \rvert}\sum_{\tit}\smooth \mathbb{E}_{\bvx \sim \ptheta}\left[ \lv \widehat{m}(\bvx) - \mutheta \rv \right]  
     \overset{(ii)}{\le} \sqrt{\smooth},
\end{align*}
where $(i)$ follows by replacing the infimum over all estimator by our estimator and by the definition of $\barL$, and $(ii)$ follows by invoking inequality~\eqref{eq:maximumnormbound} established above. This completes our proof.
\end{proof}

\printbibliography
\end{document}